\documentclass[conference]{IEEEtran}
\IEEEoverridecommandlockouts

\usepackage{cite}
\usepackage{amsmath,amssymb,amsfonts}
\usepackage{mathtools}
\usepackage{mathrsfs}
\usepackage{graphicx}
\usepackage{textcomp}
\usepackage{xcolor}
\usepackage{booktabs}
\usepackage{nicefrac}
\usepackage{microtype}
\usepackage{url}
\usepackage{physics}
\usepackage{bm}
\usepackage[linesnumbered,ruled]{algorithm2e}
\usepackage{bbm}
\usepackage{dsfont}

\def\BibTeX{{\rm B\kern-.05em{\sc i\kern-.025em b}\kern-.08em
    T\kern-.1667em\lower.7ex\hbox{E}\kern-.125emX}}

\newtheorem{theorem}{Theorem}

\newtheorem{lemma}[theorem]{Lemma}
\newtheorem{corollary}[theorem]{Corollary}

\begin{document}

\title{Coordinating the Unknown Lipschitz Constant in Multiplayer Bandits}

\author{\IEEEauthorblockN{Ricardo Parada}
\IEEEauthorblockA{\textit{University of California, Riverside}\\
Riverside, CA, USA \\
rparadaumanzor@gmail.com}
\and
\IEEEauthorblockN{Chenzhang Zhao}
\IEEEauthorblockA{\textit{University of California, Los Angeles}\\
Los Angeles, CA, USA \\
zhaochenzhang03@g.ucla.edu}
\and
\IEEEauthorblockN{William Chang}
\IEEEauthorblockA{\textit{University of California, Los Angeles}\\
Los Angeles, CA, USA \\
chang314@g.ucla.edu}
}

\maketitle

\begin{abstract}
Motivated by decentralized applications, we study cooperative multi-agent bandits in continuous (Lipschitz) action spaces when the Lipschitz constant is unknown. We consider three information structures: (A)~unobserved actions with common rewards, (B)~observed actions with independent rewards, and (C)~unobserved actions with independent rewards. In each case we design and analyze an algorithm that estimates the Lipschitz constant, chooses a discretization of the joint action space, and applies a cooperative bandit method to the induced discrete problem. Players never communicate once learning starts, so the central difficulty is that they must reach the \emph{same} discretization from their own data. We prove regret guarantees showing that common rewards and observable actions each supply this agreement for free, and that in their absence agreement can still be bought, through a dithered quantization of the estimate, at no cost in the leading order of the regret.
\end{abstract}

\begin{IEEEkeywords}
Lipschitz bandits, multiplayer bandits, information asymmetry, discretization.
\end{IEEEkeywords}

\section{Introduction}

The multi-armed bandit (MAB) problem is a central model in sequential decision-making: an agent repeatedly chooses among actions with unknown rewards, balancing exploration and exploitation to maximize cumulative reward. Many deployments are inherently distributed, with several decision-makers acting concurrently and observing only part of the system feedback. In wireless systems, multiple users opportunistically access channels whose availabilities are unknown and time-varying, often without explicit coordination, as in cognitive radio and dynamic spectrum access~\cite{liu2010distributed,anandkumar2011distributed}. Similar abstractions arise in distributed sensing and radar networks, where nodes select waveforms or frequency bands while attempting to avoid mutual interference~\cite{howard2021cognitiveRadar}. These settings motivate cooperative multi-player bandits in which agents coordinate implicitly under information constraints, and the relevant information structure is dictated by what each agent can observe: actions, rewards, or both.

To model continuously many actions we work with Lipschitz bandits, where the mean reward varies smoothly over a metric space so that nearby actions have similar rewards. Discretization-based methods exploit this structure, but the resolution they should use depends on the Lipschitz constant~$L$. We therefore study the setting in which $L$ is \emph{not} given in advance, extending the multiplayer Lipschitz bandits of~\cite{chang2025metric}. Three forms of information asymmetry, described in Section~\ref{subsec:extension} and referred to as Problems~A, B, and~C, are treated in turn: unobserved actions with common rewards, observed actions with independent rewards, and unobserved actions with independent rewards. Because players cannot communicate once learning begins, an unknown $L$ creates a difficulty absent in the single-agent problem: each player estimates $L$ from its own observations, and if these estimates disagree the players build \emph{different} grids and can no longer be regarded as playing a common discrete bandit. Coordination, rather than estimation accuracy, is the binding constraint.

\textbf{Contributions.} We give a meta-algorithm, \texttt{mECAB}, that estimates an upper confidence bound on $L$ by uniform exploration of a coarse grid, fixes a discretization, and runs an existing cooperative multiplayer MAB subroutine; we prove regret bounds for all three information structures, each resting on a different agreement mechanism, namely shared rewards in Problem~A, signalling through observed actions in Problem~B, and a dithered quantization of the estimate in Problem~C, the last of which makes the agreement probability independent of the instance in a way no deterministic rounding rule can; and we report simulations comparing Lipschitz-adaptive with non-adaptive discretization.

\subsection{Related Work}

The Lipschitz bandit literature originates with the continuum-armed bandit (CAB) model~\cite{agrawal1995continuum}, in which arms lie in a continuum and the mean reward obeys a continuity condition; sharper discretization strategies followed~\cite{kleinberg2004nearly}, and the zooming algorithm~\cite{kleinberg2008metric} treats general metric spaces through the covering dimension. Closest to our starting point is~\cite{bubeck2011lipschitzconstant}, which removes the need to know $L$ a priori by estimating it from a uniform grid; our estimator and notation follow that work.

Multi-player extensions introduce collisions and asymmetric feedback; see~\cite{boursier2024survey} for a survey. Recent work on cooperative learning under different information structures adapts UCB-type ideas to obtain near-optimal guarantees~\cite{chang2021multiplayer,chang2023optimal,chang2025metric}, and we use those algorithms as subroutines. To the best of our knowledge no existing multi-player algorithm addresses Lipschitz bandits with an unknown Lipschitz constant; the closest work is~\cite{bistritz2020cooperative}, which assumes a maximal Lipschitz constant but does not learn~$L$.

\section{Preliminaries}\label{sec:preliminary}

Consider a $d$-dimensional compact set of arms $X = [0,1]^{d}$, $d \geq 1$. Each arm $\bm{x}$ carries a reward distribution $\nu_{\bm{x}}$ supported on $[0,1]$, with mean-payoff function $f : [0,1]^{d} \to [0,1]$. At each round $t \geq 1$ the player selects $\bm{x}_{t}$ and receives $Y_{t} \sim \nu_{\bm{x}_{t}}$, drawn independently across rounds.

\textbf{Assumption.} $f$ is twice differentiable with Hessian uniformly bounded by $N$: for all $\bm{x},\bm{y}$, $|\bm{y}^{\top} H_{f}(\bm{x})\,\bm{y}| \leq N\|\bm{y}\|_{\infty}^{2}$. The map $\|\grad f\|_1$ is continuous and attains its maximum $L$ on $[0,1]^{d}$, so $f$ is $L$-Lipschitz with respect to $\|\cdot\|_{\infty}$. Write $\mathcal{F}_{L,N}$ for the class of mean-payoff functions satisfying both conditions with parameters $L$ and $N$; all suprema below are over this class, and neither parameter is known to the players.

With $f^{\star} = \max_{\bm{x}} f(\bm{x})$, the expected regret at horizon $T$ is $R_{T} = \mathbb{E}[Tf^{\star} - \sum_{t=1}^{T} f(\bm{x}_t)]$, the expectation being over the draws of $Y_t$ and any internal randomization. The goal is to minimize $R_T$ without knowing $L$.

\section{Main Results}\label{sec:main}
\subsection{Extension to the Multi-Agent Setting}
\label{subsec:extension}

Let $P_{1},\dots,P_{M}$ be players, each holding a $d$-dimensional set of arms $A_{i} = [0,1]^{d}$, so the joint action space is $\mathcal{A} = A_{1}\times\cdots\times A_{M} = [0,1]^{Md}$, with joint arms $\bm{a} = (a_1,\dots,a_{Md})$. Players may agree on a strategy, and on any shared randomness it uses, before learning begins, but cannot communicate afterwards. At each round every player chooses $\bm{a}^{i}_{t}\in[0,1]^{d}$ simultaneously, forming $\bm{a}_{t}$; with $f : \mathcal{A}\to[0,1]$ in $\mathcal{F}_{L,N}$, the objective is again to minimize $R_T$ when $L$ is unknown.

\textbf{Problem A: unobserved actions, common rewards.} Every player receives the same reward $Y_{t}$ but does not observe the actions of the others.

\textbf{Problem B: observed actions, independent rewards.} Every player observes the actions of all others, but rewards are i.i.d.\ across players and player~$i$ sees only its own reward $Y_t^i$. The regret $R_T^i = \mathbb{E}[Tf^{\star} - \sum_{t} f(\bm{a}_t)]$ does not depend on $i$, since the rewards are identically distributed.

\textbf{Problem C: unobserved actions, independent rewards.} The two difficulties combine: rewards are i.i.d.\ across players and actions are unobserved.

\subsection{The Main Algorithm}

The algorithm is motivated by~\cite{bubeck2011lipschitzconstant} and follows the classical CAB template of~\cite{kleinberg2004nearly}: use the Lipschitz constant to discretize the space, then run a standard MAB algorithm on the resulting finite set. When $L$ is known this yields sublinear regret; when it is unknown we must first estimate it.

During exploration each player splits its own action set into $m^d$ bins, inducing $m^{Md}$ joint bins, and estimates an upper bound on $L$ from the differences between neighboring bins. Write $\overline{L}_m$ for the expectation of the estimator of~\cite{bubeck2011lipschitzconstant}, which approaches $L$ as $m$ grows.

\begin{lemma}[Bubeck et al.~\cite{bubeck2011lipschitzconstant}]\label{lemma:bias}
For $m\geq 3$, $\;L-\frac{7N}{m} \leq \overline{L}_m \leq L$, where $N$ is the Hessian bound.
\end{lemma}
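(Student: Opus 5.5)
The plan is to work with the deterministic, noise-free version of the estimator of~\cite{bubeck2011lipschitzconstant}, which is how $\overline{L}_m$ is defined there. Partition $[0,1]^{d}$ into $m^d$ cubes of side $1/m$. Write $\bar f(C)$ for the average of $f$ over a cube $C$ (this is the expected empirical mean under uniform sampling in $C$). Set
\[
\overline{L}_m=m\max_{C}\sum_{j=1}^{d}\bigl|\bar f(C+\tfrac{1}{m}e_j)-\bar f(C)\bigr|,
\]
with the maximum over cubes whose neighbours exist. If the original estimator uses neighbours in either direction, the argument below is the same. The two inequalities are proved separately, with the upper bound straightforward and the lower bound carrying the constant $7$.

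\textbf{Upper bound.} For a fixed cube $C$, write each difference as an average over $u\in C$:
\[
\bar f(C+\tfrac1m e_j)-\bar f(C)=\mathbb{E}_{u\in C}\Bigl[\tfrac1m\int_0^1\partial_j f(u+\tfrac{s}{m}e_j)\,ds\Bigr].
\]
Move the absolute value inside the expectation and the integral, then sum over $j$. The integrand is then bounded by $\|\nabla f\|_1\le L$ at each point, so $m\sum_j|\cdots|\le L$ for every $C$, and hence $\overline{L}_m\le L$. The only subtlety is that the coordinates are evaluated at different points $u+\tfrac{s}{m}e_j$. This is harmless, because we bound $\sum_j|\partial_j f(z_j)|$ by $\sum_j \max_z|\partial_j f(z)|$, which is $\le L$.

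\textbf{Lower bound.} Let $x^\star$ attain $\max\|\nabla f\|_1=L$, and put $\sigma_j=\operatorname{sign}\partial_j f(x^\star)$. Since $m\ge3$, we can choose a cube $C$ whose relevant neighbours lie in $[0,1]^d$ and such that every point $z$ appearing in the integrals satisfies $\|z-x^\star\|_\infty\le 3/m$. Near a face this means choosing the direction of each neighbour toward the interior. Then
\[
\overline{L}_m\ge m\sum_j\sigma_j\bigl(\bar f(C+\tfrac1m e_j)-\bar f(C)\bigr)=\mathbb{E}_{u}\int_0^1\sum_j\sigma_j\,\partial_j f(u+\tfrac sm e_j)\,ds .
\]
Compare $\sum_j\sigma_j\partial_j f(z_j)$ with $\sum_j\sigma_j\partial_j f(x^\star)=L$.

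\textbf{Main obstacle.} This comparison is the hard step, because the Hessian assumption is only a bound on the quadratic form $|y^\top H y|\le N\|y\|_\infty^2$. Our plan is to split each $z_j$ into the common point $u$ plus the displacement $\tfrac sm e_j$. For the common part,
\[
\sum_j\sigma_j(\partial_j f(u)-\partial_j f(x^\star))=\int_0^1\sigma^\top H(x^\star+r(u-x^\star))(u-x^\star)\,dr .
\]
Polarization bounds $|a^\top Hb|\le \tfrac N4(\|a+b\|_\infty^2+\|a-b\|_\infty^2)$. Rescaling $a=\sigma/\lambda$, $b=\lambda(u-x^\star)$ and optimizing $\lambda$ turns this into a bound of order $N\|u-x^\star\|_\infty\le cN/m$. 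For the per-coordinate displacement, $\partial_j f(u+\tfrac sm e_j)-\partial_j f(u)$ is governed by the diagonal entries of $H$, and $|H_{jj}|\le N$ from the assumption with $y=e_j$. A second-order Taylor expansion along $e_j$ then gives a total error of order $N/m$ for the sum over $j$, since the displacement integrals telescope against $f(u+\tfrac1m e_j)-f(u)$. Collecting the constants from the distances $\le 3/m$ and the polarization step should give $\overline{L}_m\ge L-7N/m$.

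If the constant does not come out as exactly $7$ with this cube choice, we will follow~\cite{bubeck2011lipschitzconstant}'s choice of the cube nearest $x^\star$ together with their bookkeeping. The shape $L-O(N/m)$ is what matters later in the paper.
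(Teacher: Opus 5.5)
\textbf{There is a genuine gap: the proposal proves bounds for a different estimator from the paper's, and for that estimator the upper bound is false.}

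The paper does not prove this lemma; it imports it from Bubeck et al. The estimator there is the one in \eqref{discretization_A}. Writing $B_k$ for bin $k$ and $\overline{f}_m(k)$ for the mean of $f$ over $B_k$, it is $\overline{L}_m=m\max_{k\in[m-2]^d,\,s\in\{-1,1\}^d}|\overline{f}_m(k)-\overline{f}_m(k+s)|$ (read $Md$ for $d$ on the joint space). This is a \emph{single} difference between a bin and a \emph{diagonal} neighbour $k+s$.

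You replaced it by $m\max_C\sum_j|\overline{f}(C+\tfrac1m e_j)-\overline{f}(C)|$, and this breaks the argument.
\begin{itemize}
\item Your upper-bound step asserts $\sum_j\max_z|\partial_j f(z)|\le L$. The inequality runs the other way: $\sum_j\max_z|\partial_j f(z)|\ge\max_z\sum_j|\partial_j f(z)|=L$.
\item The upper bound actually fails for your estimator. Take $d=2$, $C=[0,\tfrac1m]^2$, and a mollification of $L\max(x_1,x_2)$. Its gradient is a convex combination of $(L,0)$ and $(0,L)$, so $\|\nabla f\|_1\equiv L$. Yet the term for this $C$ tends to $2m\bigl(\tfrac{3L}{2m}-\tfrac{2L}{3m}\bigr)=\tfrac53L$ as the mollification width shrinks.
\item The lower bound has a related defect. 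The per-coordinate remainders $\tfrac{1}{2m^2}H_{jj}(\xi_j)$ sit at different points $\xi_j$. Bounding each by $\tfrac{N}{2m^2}$, as you propose, gives an error of $dN/(2m)$, not a dimension-free $7N/m$. The ``telescoping'' you invoke is not spelled out and does not remove the factor $d$.
\end{itemize}

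With the correct estimator both directions are short.

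\emph{Upper bound.} Since $B_{k+s}=B_k+s/m$, we have $\overline{f}_m(k+s)-\overline{f}_m(k)=\mathbb{E}_{u\in B_k}[f(u+s/m)-f(u)]$. Then $|f(u+s/m)-f(u)|\le L\|s/m\|_\infty=L/m$ gives $\overline{L}_m\le L$ at once. The diagonal step is exactly what lets $|\nabla f(z)\cdot s|\le\|\nabla f(z)\|_1\|s\|_\infty$ be applied at a single point $z$.

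\emph{Lower bound.} Take $x^\star$ with $\|\nabla f(x^\star)\|_1=L$, and $s=\operatorname{sign}\nabla f(x^\star)$ (any sign on zero coordinates). Let $k\in[m-2]^d$ be the interior bin nearest $x^\star$; it exists because $m\ge 3$. Then $\|u-x^\star\|_\infty\le 2/m$ and $\|u+s/m-x^\star\|_\infty\le 3/m$ for $u\in B_k$. Expand $f(u+s/m)$ and $f(u)$ to second order about the \emph{same} point $x^\star$ and subtract. The linear terms leave $\tfrac1m\nabla f(x^\star)\cdot s=L/m$. The remainders are pure quadratic forms $y^\top Hy$, bounded in total by $\tfrac N2\cdot\tfrac{9+4}{m^2}$. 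Averaging over $u$ and multiplying by $m$ gives $\overline{L}_m\ge L-\tfrac{13N}{2m}\ge L-\tfrac{7N}{m}$, with no polarization needed.

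Your rescaled polarization bound $|a^\top Hb|\le 2N\|a\|_\infty\|b\|_\infty$ is correct. Applied to $s^\top H(z-x^\star)$ with $\|z-x^\star\|_\infty\le 3/m$, it would also work for the diagonal estimator, giving $L-6N/m$.
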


\begin{lemma}\label{lemma:concentration}
If each joint bin is explored with $E'$ samples, then with probability at least $1-\delta$,
$\bigl|\widehat{L}_m-\overline{L}_m\bigr| \leq m \sqrt{\frac{2}{E'} \ln \frac{2 m^{Md}}{\delta}}$.
\end{lemma}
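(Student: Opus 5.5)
We need to fix the estimator from Bubeck et al.: for each joint bin $B$ the empirical mean $\widehat{\mu}_B$ is the average of $E'$ rewards, and
\[
\widehat{L}_m=\max_{B}\sum_{j=1}^{Md} m\,\bigl|\widehat{\mu}_{B+e_j}-\widehat{\mu}_{B}\bigr|,
\]
that is, a maximum over bins of the $\ell_1$ norm of finite-difference gradient estimates scaled by $m$. Here $B+e_j$ denotes the neighbouring bin in coordinate $j$; at the boundary the backward neighbour is used. The expectation $\overline{L}_m$ is the same functional applied to the true bin means $\mu_B$. Under this reading, $\overline{L}_m$ is the plug-in quantity evaluated at exact means rather than literally $\mathbb{E}\widehat{L}_m$. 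This is the quantity Lemma~\ref{lemma:bias} controls, and I would state explicitly that this is the interpretation being used.

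The plan is a uniform concentration argument followed by a Lipschitz property of the functional.

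\textbf{Step 1: uniform concentration of the bin means.} Each $\widehat{\mu}_B$ is the average of $E'$ independent $[0,1]$-valued rewards with mean $\mu_B$. This holds when the bin is sampled at uniformly random points, or at fixed points with the bias absorbed into the definition of $\mu_B$. Hoeffding's inequality gives
\[
\Pr\bigl(|\widehat{\mu}_B-\mu_B|>\varepsilon\bigr)\le 2e^{-2E'\varepsilon^2}.
\]
A union bound over the $m^{Md}$ bins gives failure probability at most $2m^{Md}e^{-2E'\varepsilon^2}$. Setting this equal to $\delta$ yields $\varepsilon=\sqrt{\tfrac{1}{2E'}\ln\tfrac{2m^{Md}}{\delta}}$. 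In the multiplayer problems the samples are the shared reward (Problem A) or each player's own i.i.d. reward (Problems B and C). In every case the samples for a fixed player are independent, so the bound holds per player.

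\textbf{Step 2: propagate through the functional.} On the good event, each finite difference satisfies
\[
\bigl|(\widehat{\mu}_{B'}-\widehat{\mu}_B)-(\mu_{B'}-\mu_B)\bigr|\le 2\varepsilon.
\]
The map $x\mapsto|x|$ is 1-Lipschitz, and a maximum of functions is Lipschitz in sup-norm with the same constant. Hence
\[
|\widehat{L}_m-\overline{L}_m|\le m\cdot(\text{number of summed differences})\cdot 2\varepsilon.
\]

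\textbf{Step 3: reconcile the constant.} This is the main obstacle. A single difference contributes $2m\varepsilon=m\sqrt{\tfrac{2}{E'}\ln\tfrac{2m^{Md}}{\delta}}$, which matches the stated bound exactly. However, the $\ell_1$ sum over $Md$ coordinates would naively add a factor $Md$.

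I would first check whether the Bubeck et al. estimator uses only one coordinate difference per term, i.e.\ a max over bins and directions, with $\|\cdot\|_1$ arising only in the limit. In that case Step 2 gives the bound directly.

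If the sum is genuine, I would apply Hoeffding directly to each combined statistic $\sum_j m(\widehat{\mu}_{B+e_j}-\widehat{\mu}_B)\,s_j$ for fixed sign patterns $s$, which avoids compounding per-mean errors. Otherwise I would accept an extra dimension-dependent factor and note that it can be absorbed into $E'$. The stated constant strongly suggests the one-difference reading, so I would adopt it.
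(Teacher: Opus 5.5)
Your argument is correct and is the standard one behind this lemma. The paper gives no separate proof, but the first claim of Lemma~\ref{lemma:bad} runs the same Hoeffding, triangle-inequality and union-bound argument (more crudely, over the $(2m)^{Md}$ pairs $(k,s)$) with $\overline{L}_m$ read, as you do, as the plug-in at the true bin means $\overline{f}_m(k)$, and your union bound over the $m^{Md}$ bins is exactly what produces the constant $m\sqrt{\frac{2}{E'}\ln\frac{2m^{Md}}{\delta}}$. Step~3 is unnecessary: the estimator \eqref{discretization_A} is $m\max_{k,s}|\widehat{\mu}_k-\widehat{\mu}_{k+s}|$ over bins $k$ and diagonal sign vectors $s\in\{-1,1\}^{Md}$, one difference per term (the $\ell_1$ norm of the gradient comes from maximizing $|\nabla f\cdot s|$ over $s$, not from summing coordinate differences), so your opening $\sum_j$ definition should simply be replaced by it.
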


Adding the deviation of Lemma~\ref{lemma:concentration} to $\widehat{L}_m$ produces an upper confidence bound $\widetilde{L}$ on $\overline L_m$, which sets the discretization $\widetilde{m}$; combining the two lemmas with $\delta = 1/T$ gives the two-sided control used in all of the proofs.

\begin{corollary}\label{cor:Lbounds}
Fix $\delta = 1/T$ and let $E'$ denote the number of samples per joint bin available to a player, so $E' = ME$ in Problem~B and $E' = E$ in Problems~A and~C. With probability at least $1-1/T$,
\[
\overline{L}_m \;\leq\; \widetilde{L}_m \;\leq\; L + 1 + 2m\sqrt{\tfrac{2}{E'}\ln\!\left(2m^{Md}T\right)},
\]
and if in addition $m \geq 8N/L$ then $\widetilde{L}_m \geq L/8 - 1$.
\end{corollary}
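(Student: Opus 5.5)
The plan is to define $\widetilde{L}_m$ concretely and then combine Lemma~\ref{lemma:concentration} with Lemma~\ref{lemma:bias}. Following the construction described before the statement, set $\Delta := m\sqrt{\tfrac{2}{E'}\ln(2m^{Md}T)}$. This is the deviation of Lemma~\ref{lemma:concentration} with $\delta=1/T$. Then take $\widetilde{L}_m := \widehat{L}_m + \Delta$. The "$+1$" in the upper bound suggests the estimate may be rounded up to an integer or to a unit grid so that $\widetilde m$ is well defined. I would therefore allow $\widetilde{L}_m \in [\widehat{L}_m+\Delta,\ \widehat{L}_m+\Delta+1]$, which covers a ceiling, a dithered quantization with step at most one, or no rounding at all.

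Let $\mathcal{E}$ be the event $|\widehat{L}_m-\overline{L}_m|\le \Delta$. By Lemma~\ref{lemma:concentration}, applied with the per-player sample count $E'$, we have $\Pr(\mathcal{E})\ge 1-1/T$. The value of $E'$ depends on the problem. In Problem~B, observed actions let each player attribute all $M$ players' samples, so $E'=ME$ per joint bin. In Problems~A and~C, each player sees only the rewards of the schedule it can attribute, so $E'=E$. I would state this identification as a bookkeeping remark on the exploration schedule, not prove anything new.

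On $\mathcal{E}$ the lower bound is immediate: $\widetilde{L}_m \ge \widehat{L}_m+\Delta \ge \overline{L}_m$. For the upper bound, note that $\widetilde{L}_m \le \widehat{L}_m+\Delta+1 \le \overline{L}_m+2\Delta+1$, and $\overline{L}_m\le L$ by Lemma~\ref{lemma:bias} (valid for $m\ge3$). This gives $\widetilde{L}_m\le L+1+2\Delta$, which is exactly the stated bound.

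For the final claim, use the lower bound from Lemma~\ref{lemma:bias}: $\widetilde{L}_m\ge \overline{L}_m \ge L - 7N/m$. If $m\ge 8N/L$ then $7N/m \le 7L/8$, so $\widetilde{L}_m \ge L/8 \ge L/8-1$. The "$-1$" slack would only be needed if the quantization could round down. In that case I would instead argue $\widetilde{L}_m \ge \widehat{L}_m+\Delta-1 \ge \overline{L}_m-1$, which still gives $L/8-1$ on $\mathcal{E}$.

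The main obstacle is not analytic but definitional: the paper has not yet fixed exactly how $\widetilde L_m$ is formed (plain UCB versus a rounded or dithered value). I would handle this by proving the corollary for any $\widetilde L_m$ within distance $1$ of $\widehat L_m+\Delta$, rounding in either direction. The same event $\mathcal{E}$ then yields all the inequalities simultaneously, at the cost of the $\pm1$ slack in each.
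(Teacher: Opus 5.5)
Your proof is the paper's: Lemma~\ref{lemma:concentration} with $\delta=1/T$ gives $|\widehat L_m-\overline L_m|\le\Delta$ with probability at least $1-1/T$. The two sides of Lemma~\ref{lemma:bias}, with $7N/m\le 7L/8$ when $m\ge 8N/L$, finish it.

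One correction to your rounding discussion. The paper's Problem~C rule $\widehat L^{\,i}=\lfloor X^i+U\rfloor$ lies in $(X^i-1,X^i+1)$. So it is your round-down case, not the interval $[\widehat L_m+\Delta,\widehat L_m+\Delta+1]$ that you said covers dithered quantization.

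In that case the first inequality $\overline L_m\le\widetilde L_m$ is not merely unproved but false. Take $\overline L_m=5.5$ and $\Delta<\tfrac14$. On the concentration event, every $U<\tfrac14$ gives $\widehat L^{\,i}=5$, hence $\widetilde L_m=5+\Delta<\overline L_m$. That event has probability at least about $\tfrac14$, far above $1/T$. This regime is the relevant one, since $\Delta$ is small for the exploration budgets used in the theorems.

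So do not claim the corollary verbatim for rounding in either direction. State $\overline L_m\le\widetilde L_m$ for Problems~A and~B, and only $\overline L_m-1<\widetilde L_m$ for Problem~C. Your argument still gives the upper bound and $\widetilde L_m\ge L/8-1$ in all three problems, and these are the only parts the regret proofs use.

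A minor point: in Problem~B the extra samples come from the empirical means that players encode in their actions, not from observing actions as such.
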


The lower bound prevents the algorithm from choosing too coarse a grid and is the only place where the Hessian bound $N$ enters: the condition $m \geq 8N/L$ asks that the coarse grid already resolve the curvature of $f$, and it holds for all large $T$ under the choice of $m$ made in the proofs. Any multiplayer MAB algorithm, for instance~\cite{chang2021multiplayer,chang2023optimal}, is then run on the discretized joint space; Algorithm~\ref{alg} collects the steps.

\begin{algorithm}[t]
\caption{\texttt{mECAB}}
\label{alg}
\KwIn{Horizon $T$, coarse bins $m$, exploration budget $E$, dimension $d$.}
\textbf{Initialize:} each player divides $A_i = [0,1]^{d}$ into $m^{d}$ bins, inducing joint bins $\underline{k}\in\{0,\dots,m^{Md}-1\}$.\\
\textbf{Pre-learning:} players agree on an ordering of the joint bins, and in Problem~C on a dither $U\sim\mathrm{Unif}[0,1)$.\\
\textbf{Exploration:}\\
\For{each joint bin $\underline{k} \in \{0,\dots,m^{Md}-1\}$}{
    Each player samples $E$ actions uniformly from its own bin and observes the resulting rewards.\label{explore}\\
    Compute the empirical bin mean $\widehat{\mu}_{\underline{k}}$ (resp.\ $\widehat{\mu}^{\,i}_{\underline{k}}$).
}
Form $\widehat{L}$ from \eqref{discretization_A}, \eqref{discretization_B}, or \eqref{discretization_C} according to the problem, and set
\begin{equation}\label{eq:mtilde}
\widetilde{L} = \widehat{L}+ m\sqrt{\tfrac{2}{E'}\ln (2m^{Md}T)}, \;\;
\widetilde{m} = \bigl\lceil \widetilde{L}^{\frac{2}{Md+2}} T^{\frac{1}{Md+2}}\bigr\rceil,
\end{equation}
with $E' = ME$ in Problem~B and $E' = E$ otherwise.\label{discretization}\\
\textbf{Exploitation:}\\
\For{$t = E\,m^{Md} + 1$ \KwTo $T$}{
    Play the multiplayer MAB subroutine on the $\widetilde{m}^{Md}$ joint actions.
}
\end{algorithm}

\section{Problem A: Action Information Asymmetry}

Here the environment produces a single common reward observed by all players, while each player does not observe the others' actions. Unobserved actions prevent coordination during play, and the unknown smoothness must be estimated without access to the exploration of the others. What rescues the situation is that the reward is shared, which makes the exploration statistics shared as well once the schedule is fixed in advance.

Concretely, the players agree on an ordering of the $m^{Md}$ joint bins and each samples uniformly inside the scheduled bin. Only the bin index affects the statistic collected, not the particular arm chosen inside it, so all players obtain the same empirical mean
$\widehat{\mu}_{\underline{k}} = \frac{1}{E}\sum_{j=1}^{E} Z_{\underline{k},j}$
for every joint bin $\underline{k}$, and each is free to choose any arm within its own bin. Consequently every player forms the same estimate
\begin{equation}\label{discretization_A}
 \widehat{L} = m \max_{k \in[m-2]^{Md},\; s \in \{-1,1\}^{Md}} \bigl|\widehat{\mu}_k - \widehat{\mu}_{k+s}\bigr|,
\end{equation}
hence the same $\widetilde{L}$ and $\widetilde{m}$, and the ordering of the coarse bins induces a consistent ordering of the finer $\widetilde{m}$-level grid that everyone can follow.

\begin{theorem}\label{regret_A}
Let $m \geq 8N/L$ and use \texttt{m-UCB} of~\cite{chang2021multiplayer} with $\widehat{L}$ from~\eqref{discretization_A}. Then \texttt{mECAB} satisfies
\begin{multline*}
    \sup_{\mathcal{F}_{L,N}}R_{T}
    \leq T^{(Md+1)/(Md+2)} \\
    \cdot \left(9L^{Md/(Md+2)} + 5\left(2m\sqrt{\tfrac{2}{E}\ln{\left(2T^{Md+1}\right)}}\right)^{Md/(Md+2)}\right)\\
    + Em^{Md}+32\sqrt{T\widetilde{m}^{Md}\log T} + 1.
\end{multline*}
\end{theorem}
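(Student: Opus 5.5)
We decompose the regret into three pieces and bound each separately.

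\textbf{Decomposition.} The first piece is the exploration phase, which costs at most $Em^{Md}$ because each of the $Em^{Md}$ rounds loses at most $1$. The second is the failure event of Corollary~\ref{cor:Lbounds} with $\delta=1/T$, which has probability at most $1/T$ and costs at most $T$, contributing the final $+1$. The third is the exploitation phase on the good event. By the discussion preceding the theorem, all players compute the identical statistic \eqref{discretization_A}, because the shared rewards make the bin means $\widehat\mu_{\underline k}$ common. Hence they share $\widetilde L$ and $\widetilde m$ deterministically, with no agreement failure, and the problem reduces exactly to a cooperative MAB on $\widetilde m^{Md}$ joint arms with common rewards. On this event we split the exploitation regret as
\[
T\bigl(f^\star - \max_{\text{grid}} f\bigr) \;+\; \text{(regret of \texttt{m-UCB} against the best grid arm)}.
\]
The \texttt{m-UCB} guarantee of~\cite{chang2021multiplayer} gives the second term as $32\sqrt{T\widetilde m^{Md}\log T}$, which we keep in that form as in the statement.

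\textbf{Discretization error.} Since $f$ is $L$-Lipschitz in $\|\cdot\|_\infty$ and the grid has mesh $1/\widetilde m$, the approximation term is at most $TL/\widetilde m$. Using $\widetilde m \ge \widetilde L^{2/(Md+2)}T^{1/(Md+2)}$, this is at most $L\,\widetilde L^{-2/(Md+2)}T^{(Md+1)/(Md+2)}$. Here the lower bound $\widetilde L\ge L/8-1$ from Corollary~\ref{cor:Lbounds}, which needs $m\ge 8N/L$, controls $\widetilde L^{-2/(Md+2)}$. We must also handle the case where $L/8-1$ is small. For that case, observe that $\widetilde m\ge 1$ and that a small $L$ makes the trivial bound $TL/\widetilde m\le TL$ comparable to $L^{Md/(Md+2)}T^{(Md+1)/(Md+2)}$ when $L\lesssim T^{-1/2}$-ish. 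We would split into cases $L\le 16$ versus $L>16$, using $L/8-1\ge L/16$ in the latter, so that $L(L/16)^{-2/(Md+2)}\le 16^{2/(Md+2)}L^{Md/(Md+2)}\le c\,L^{Md/(Md+2)}$ with $c\le 9$ after absorbing constants. This is where the constant $9$ should come from, and the bookkeeping for small $L$ is the step I expect to be most delicate.

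\textbf{Size of the grid.} We also need $\widetilde m^{Md}$ bounded, in order to express the $\sqrt{T\widetilde m^{Md}\log T}$ term. Alternatively, any leftover $\widetilde L$-dependence is converted using the upper bound $\widetilde L\le L+1+2m\sqrt{(2/E)\ln(2m^{Md}T)}$. Subadditivity of $x\mapsto x^{Md/(Md+2)}$ splits this into $L^{Md/(Md+2)}$, $1$, and $\bigl(2m\sqrt{\cdot}\bigr)^{Md/(Md+2)}$, and bounding $\ln(2m^{Md}T)\le\ln(2T^{Md+1})$ uses $m\le T$. The factor $5$ on the deviation term arises from collecting constants when rounding the ceiling ($\widetilde m\le 2\widetilde L^{2/(Md+2)}T^{1/(Md+2)}$ when the argument is at least $1$). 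Summing the three pieces then yields the stated bound.
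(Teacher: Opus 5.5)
Your decomposition matches the paper's: exploration cost $Em^{Md}$, the failure event of Corollary~\ref{cor:Lbounds} contributing $+1$, and, on the good event, the discretization bias $LT/\widetilde m$ plus the \texttt{m-UCB} regret. Agreement on $\widetilde m$ comes for free because rewards are common. The gap is in the discretization term for small $L$.

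You control $\widetilde L^{-2/(Md+2)}$ only through $\widetilde L\ge L/8-1$, which is vacuous for $L\le 8$. Your fallback $TL/\widetilde m\le TL$ only works for $L\lesssim T^{-1/2}$. Take, say, $L=4$ with a large exploration budget $E$. The only lower bound on $\widetilde L$ left in your argument is then the padding $m\sqrt{(2/E)\ln(2m^{Md}T)}$, which can be arbitrarily small. So nothing you wrote excludes $\widetilde m=1$, and then the bias term is of order $T$, not $O(L^{Md/(Md+2)}T^{(Md+1)/(Md+2)})$. The case $L\le 16$ is therefore not delicate bookkeeping; it is unproved.

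The missing step is to use the \emph{first} inequality of Corollary~\ref{cor:Lbounds} together with Lemma~\ref{lemma:bias}. On the good event, $\widetilde L\ge \overline L_m\ge L-7N/m\ge L/8$ once $m\ge 8N/L$ (and $m\ge 3$). Hence, for every $L$ and with no case split,
\[
\frac{LT}{\widetilde m}\le L\Bigl(\frac{L}{8}\Bigr)^{-2/(Md+2)}T^{(Md+1)/(Md+2)}\le 8^{2/3}L^{Md/(Md+2)}T^{(Md+1)/(Md+2)}=4L^{Md/(Md+2)}T^{(Md+1)/(Md+2)}.
\]
The paper's sketch also invokes only ``$\widetilde L_m\ge L/8-1$'' at this point, so this repair tightens its argument as well.

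Your third paragraph is unnecessary, and its account of the constant $5$ is not a derivation. You keep the subroutine term as $32\sqrt{T\widetilde m^{Md}\log T}$, exactly as the statement does, so no upper bound on $\widetilde L$ is ever needed. The summand $5T^{(Md+1)/(Md+2)}\bigl(2m\sqrt{\cdots}\bigr)^{Md/(Md+2)}$ is then just a nonnegative term you may add to the right-hand side.

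In the paper, that term arises differently. The subroutine regret is converted into powers of $L+1$ and of the deviation, using $\widetilde m\le x(1+1/x)$, $(1+1/x)^{Md}\le e$ and the upper bound of Corollary~\ref{cor:Lbounds}. It does not come from rounding the ceiling in the discretization term. With the lower bound repaired as above, your three pieces already yield the stated inequality.
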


On the event of Corollary~\ref{cor:Lbounds} the subroutine term is $O(T^{(Md+1)/(Md+2)}(L{+}1)^{Md/(Md+2)}\sqrt{\log T})$, matching the leading term up to $\sqrt{\log T}$; the same holds in Theorems~\ref{regret_B} and~\ref{regret_C}.

In the single-agent problem with unknown $L$, discretization yields the familiar scaling $T^{(d+1)/(d+2)}$, and Problem~A behaves the same way on the joint space with $Md$ in place of $d$. The action asymmetry therefore costs nothing beyond this dimensional effect, precisely because the common reward and the pre-agreed schedule force identical bin means, an identical $\widehat{L}$, and an identical grid. The dependence on the size of the discretized joint set is also unavoidable, since with $K$ actions each the induced finite problem has $K^M$ joint arms, to which the standard finite-armed lower bound applies.

\section{Problem B: Reward Information Asymmetry}

Problem~B reverses the structure: actions are observable but reward observations are not shared. The free synchronization of Problem~A breaks, since the bin means $\widehat{\mu}^{\,i}_k$ may differ across players and would lead to different grids if nothing further were done.

The remedy is to exploit action observability to share reward information implicitly. Since every action is observed by everyone, an action can carry a signal encoding the sender's statistics, and in a continuum this needs no departure from the action space: after collecting $E-1$ samples from a bin, a player devotes its final action in that bin to encoding its empirical mean, which the others decode and fold into their own estimate. Nothing analogous exists in Problem~A, where actions are hidden, nor in finite-action models, where there is no room to encode a real number without distorting the learning problem. At a cost of one sample, each player thus gains $(M-1)(E-1)$ further samples for every bin, and forms
\begin{equation}\label{discretization_B}
 \widehat{L} = m \max_{k \in[m-2]^{Md},\; s \in \{-1,1\}^{Md}} \left|\frac{1}{M}\sum_{i=1}^M \bigl(\widehat{\mu}_k^i - \widehat{\mu}_{k+s}^i\bigr)\right|.
\end{equation}
The effective sample size entering the concentration of $\widehat{L}$ is multiplied by $M$, which sharpens the estimate and improves the grid. Problem~A had perfect alignment of the bin means but no way to convey anything beyond the common scalar reward; Problem~B lacks common rewards but recovers most of the benefit of centralized averaging by broadcasting estimates through actions.

\begin{theorem}\label{regret_B}
Let $m \geq 8N/L$ and use the multiplayer subroutine of~\cite{chang2023optimal} for Problem~B. Then \texttt{mECAB} satisfies
\begin{multline*}
    \sup_{\mathcal{F}_{L,N}} R_{T}
    \leq T^{(Md+1)/(Md+2)}\\
    \times \Bigl(9L^{Md/(Md+2)} \\
    \qquad + 5\Bigl(2m\sqrt{\tfrac{2}{ME}\ln{(2T^{Md+1})}}\Bigr)^{Md/(Md+2)}\Bigr)\\
    + Em^{Md}+ 32\sqrt{T\widetilde{m}^{Md}\log T} + 1.
\end{multline*}
\end{theorem}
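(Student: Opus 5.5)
The plan mirrors the proof of Theorem~\ref{regret_A}. The only new ingredients are the agreement argument via signalled actions and the factor $M$ in the effective sample size. First I would fix the exploration protocol so that all players end exploration with identical statistics. For each joint bin $\underline{k}$, player $i$ spends $E-1$ rounds sampling uniformly inside its own component of the bin. On the last round it plays an arm inside that component whose position encodes $\widehat{\mu}^{\,i}_{\underline{k}}$; for instance, one coordinate is an affine image of the empirical mean inside the sub-bin. Since actions are observed exactly, every player decodes every $\widehat{\mu}^{\,j}_{\underline{k}}$. Hence every player evaluates \eqref{discretization_B} on the same numbers and obtains the same $\widehat{L}$, $\widetilde{L}$ and $\widetilde{m}$, and the pre-agreed ordering of the coarse bins induces a common ordering of the $\widetilde{m}^{Md}$ fine joint arms. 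The subroutine of~\cite{chang2023optimal} is therefore run on a genuinely common discrete problem. The signalling round is still a played action, so it is charged to the $Em^{Md}$ exploration term just like the others, since rewards lie in $[0,1]$.

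Next I would justify $E'=ME$ in Corollary~\ref{cor:Lbounds}. The averaged difference in \eqref{discretization_B} is an average of $M(E-1)\approx ME$ independent $[0,1]$ rewards per bin. Applying Hoeffding and a union bound over the $m^{Md}$ bins, exactly as in Lemma~\ref{lemma:concentration}, gives the deviation $m\sqrt{\tfrac{2}{ME}\ln(2m^{Md}T)}$. The slight loss from $E-1$ versus $E$ can be absorbed into constants, or avoided by taking $E+1$ rounds per bin. With $\delta=1/T$, and using $m^{Md}\le T^{Md}$, I obtain with probability $1-1/T$ the upper bound $\widetilde{L}\le L+1+2m\sqrt{\tfrac{2}{ME}\ln(2T^{Md+1})}$ and, since $m\ge 8N/L$, the lower bound $\widetilde{L}\ge L/8-1$. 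The failure event contributes at most $T\cdot(1/T)=1$, which is the final $+1$.

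On the good event, I would decompose the regret after exploration into a discretization term and a subroutine term. For the discretization term, the best fine-grid arm is within $1/\widetilde{m}$ of $x^\star$ in $\|\cdot\|_\infty$, so this term costs at most $TL/\widetilde{m}$. For the subroutine term, the subroutine's gap-free bound gives $32\sqrt{T\widetilde{m}^{Md}\log T}$. Substituting $\widetilde{m}=\lceil \widetilde{L}^{2/(Md+2)}T^{1/(Md+2)}\rceil$ makes $TL/\widetilde{m}\le T^{(Md+1)/(Md+2)}L\,\widetilde{L}^{-2/(Md+2)}$. Here the lower bound on $\widetilde{L}$ matters, since $L/\widetilde{L}$ must stay bounded. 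When $L/8-1$ is small, I would fall back on $\widetilde{m}\ge T^{1/(Md+2)}$ together with a case split on whether $L\le 16$.

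The main obstacle is the constant bookkeeping that produces $9L^{Md/(Md+2)}+5(\cdots)^{Md/(Md+2)}$. I expect to handle it by copying the corresponding computation from Theorem~\ref{regret_A} verbatim. The only change is replacing the deviation term with its $ME$ version. That computation uses $\widetilde{L}\le L+1+D$ and subadditivity of $u\mapsto u^{Md/(Md+2)}$ to bound the part driven by the upper bound on $\widetilde{L}$.
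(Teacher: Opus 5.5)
Your outline matches the paper's unified proof: agreement by construction (so no disagreement term), Corollary~\ref{cor:Lbounds} with $E'=ME$ and $\delta=1/T$, the $+1$ from the failure event, and the split into $TL/\widetilde{m}$ plus the subroutine bound. Two steps fail as written, however.

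The first is the small-$L$ fallback. The inequality $\widetilde{m}\ge T^{1/(Md+2)}$ requires $\widetilde{L}\ge 1$, and for small $L$ the good event does not provide it. When $L<1$ and the padding is small, $\widetilde{L}$ sits near $\overline{L}_m\le L<1$, so the fallback is unavailable exactly where you invoke it. You do not need the weak bound $L/8-1$ at all. The first inequality of Corollary~\ref{cor:Lbounds}, Lemma~\ref{lemma:bias} and $m\ge 8N/L$ give $\widetilde{L}\ge\overline{L}_m\ge L-7N/m\ge L/8$ on the good event. Hence $TL/\widetilde{m}\le 8^{2/(Md+2)}L^{Md/(Md+2)}T^{(Md+1)/(Md+2)}\le 4L^{Md/(Md+2)}T^{(Md+1)/(Md+2)}$ for every $L>0$, with no case split. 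The paper's step using $\widetilde{L}_m\ge L/8-1$ to control the first term has the same defect for $L\le 8$, and this repairs it too. The statement keeps $32\sqrt{T\widetilde{m}^{Md}\log T}$ as a separate term, so this bound already fits inside the $9L^{Md/(Md+2)}$ block. The constant bookkeeping you single out as the main obstacle is therefore mostly slack. The upper bound on $\widetilde{L}$ and subadditivity are needed only to rewrite the subroutine term explicitly, as the paper does.

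The second is the pooling step. The $M$ rewards of one exploration round are drawn at the same joint action $\bm{a}_t$. They are therefore independent only conditionally on $\bm{a}_t$ and are correlated through $f(\bm{a}_t)$, so Hoeffding over $M(E-1)$ independent rewards does not apply. Condition on the actions instead. The noise terms $Y^i_t-f(\bm{a}_t)$ are $M(E-1)$ conditionally independent variables of range $1$, and they yield the $m\sqrt{(2/(ME))\ln(\cdot)}$ deviation. But the terms $f(\bm{a}_t)-\overline{f}_m(\underline{k})$ are only $E-1$ i.i.d.\ variables of range at most $L/m$. After the factor $m$ in \eqref{discretization_B}, they add $O(L\sqrt{\ln(m^{Md}T)/E})$ to $|\widehat{L}-\overline{L}_m|$. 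The padding cannot contain this $L$-dependent term, so you get only $\widetilde{L}\ge\overline{L}_m-O(L\sqrt{\ln(m^{Md}T)/E})$ rather than $\widetilde{L}\ge\overline{L}_m$. Once $E$ is large compared with $\ln(m^{Md}T)$, as the paper's choice of $E$ ensures, this is still at least $L/16$. Since $16^{2/(Md+2)}\le 16^{2/3}<9$, the stated bound survives, and the same margin absorbs your $E-1$ versus $E$ discrepancy. The paper simply asserts $E'=ME$, so this is a gap you share with it rather than a departure. A complete proof still needs the conditioning step.
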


Compared with Theorem~\ref{regret_A} the sample size inside the square root improves from $E$ to $ME$, which is exactly the pooling gain from the other players' samples.

\section{Problem C: Reward and Action Information Asymmetry}

Problem~C is the hardest of the three: rewards are not shared, so the mechanism of Problem~A is unavailable, and actions are not observed, so the signalling of Problem~B is unavailable too. If each player simply used~\eqref{discretization_A} on its own data, the estimates would differ and the induced grids $\widetilde{m}$ would differ with them, destroying the common discrete problem the subroutine needs.

We restore agreement by quantizing the estimate, so that small discrepancies between players do not change the value they act on. Let $X^i = m \max_{k,s} |\widehat{\mu}_k^i - \widehat{\mu}_{k+s}^i|$ be the raw estimate of player $i$, the maximum running over $k \in[m-2]^{Md}$ and $s \in \{-1,1\}^{Md}$ as above. A deterministic rounding will not do: if $\overline{L}_m$ sits near a rounding boundary, two players whose estimates straddle it round differently however many samples they collect, and the failure probability approaches $1/2$ regardless of $E$. The boundaries are fixed while $\overline{L}_m$ is a property of the instance, so no deterministic rule avoids this. Instead the players agree in advance on a dither $U \sim \mathrm{Unif}[0,1)$, shared randomness requiring no communication, and set
\begin{equation}\label{discretization_C}
 \widehat{L}^{\,i} = \bigl\lfloor X^i + U \bigr\rfloor .
\end{equation}
Randomizing the offset makes the distance from $\overline{L}_m$ to the nearest boundary uniform rather than instance-dependent, so the probability of disagreement can be bounded with no reference to where $\overline{L}_m$ lies.

\begin{lemma}\label{lemma:bad}
For any $\delta > 0$ and any player $i$,
\[
P\bigl(|X^i - \overline{L}_m| > \delta\bigr) \leq 4(2m)^{Md}\exp\!\left(-\frac{E\delta^2}{32m^2}\right).
\]
Consequently, with $\widehat{L}^{\,i}$ as in~\eqref{discretization_C} and $A := 4M(2m)^{Md}$, the probability that the $M$ players do not all obtain the same value of $\widehat{L}$ is at most $17\,m\sqrt{\ln(A)/E}$.
\end{lemma}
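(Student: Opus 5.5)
The argument has two parts: a tail bound for a single player, then a conditioning argument on the shared dither $U$.

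\textbf{Tail bound.} I would write $X^i = m\max_{k,s}|\widehat\mu^i_k-\widehat\mu^i_{k+s}|$ and compare it with its population analogue $m\max_{k,s}|\mu_k-\mu_{k+s}|$. Following~\cite{bubeck2011lipschitzconstant}, I identify this analogue with $\overline{L}_m$; if the paper's $\overline L_m$ is literally an expectation, the gap between the two is absorbed in the same way as in Lemma~\ref{lemma:concentration}. Since a maximum is $1$-Lipschitz in the sup norm of its arguments,
\[
|X^i-\overline L_m|\le m\max_{k,s}\bigl|(\widehat\mu^i_k-\mu_k)-(\widehat\mu^i_{k+s}-\mu_{k+s})\bigr|\le 2m\max_{k}|\widehat\mu^i_k-\mu_k| .
\]
Each $\widehat\mu^i_k$ averages $E$ independent $[0,1]$ rewards with mean $\mu_k$, so Hoeffding gives $P(|\widehat\mu^i_k-\mu_k|>\delta/(2m))\le 2\exp(-E\delta^2/(2m^2))$. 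A union bound over the $m^{Md}$ bins then yields a bound of the stated form. The stated constants, $4(2m)^{Md}$ and $32m^2$ in the exponent, are weaker than what this gives, so they follow a fortiori. One could instead union-bound over all pairs $(k,s)$, of which there are at most $m^{Md}2^{Md}$, and use Hoeffding on the difference of two independent means with range $2$; that route reproduces the paper's constants exactly.

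\textbf{Agreement via the dither.} Condition on the exploration data, so that the values $X^1,\dots,X^M$ are fixed, and treat $U$ as the only randomness. All players obtain the same value of $\lfloor X^i+U\rfloor$ unless some integer lies strictly between $\min_i X^i+U$ and $\max_i X^i+U$. That interval has length $\Delta:=\max_iX^i-\min_iX^i$, and for $U$ uniform the probability that it contains an integer is at most $\min(1,\Delta)$. Hence
\[
P(\text{disagree})\le \mathbb E[\min(1,\Delta)]\le P(\Delta>2\delta)+2\delta\le M\cdot 4(2m)^{Md}e^{-E\delta^2/(32m^2)}+2\delta .
\]
The middle inequality uses that $\Delta\le 2\max_i|X^i-\overline L_m|$, and the last applies the tail bound with a union bound over the $M$ players. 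This is the key point that removes any dependence on where $\overline L_m$ sits relative to the integer grid.

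\textbf{Choosing $\delta$.} I would set $\delta = c\,m\sqrt{\ln(A)/E}$ with $A=4M(2m)^{Md}$, so that the first term becomes $A\cdot A^{-c^2/32}$. Taking $c=\sqrt{64}=8$ makes this $A^{-1}$. I then need to show $A^{-1}\le m\sqrt{\ln A/E}$, which I expect to be the only slightly delicate step. It holds whenever the claimed bound $17m\sqrt{\ln A/E}$ is below $1$, since otherwise the statement is trivial. In the nontrivial regime $E\ge 289\,m^2\ln A$, and then $1/A\le 16m\sqrt{\ln A/E}$ is needed only when $E$ is enormous.

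To avoid this case analysis, I would instead pick $c$ so that $A^{1-c^2/32}$ is at most $m\sqrt{\ln A/E}$ directly, by absorbing a factor of $\ln E$. Alternatively I would accept the simpler route of bounding $A^{-1}$ by $\sqrt{\ln A/E}\cdot m$ using $E\le$ a polynomial in $A$. That last step is where I expect fiddling with constants, and the final total is $2\cdot 8=16$, plus $1$, giving $17$.
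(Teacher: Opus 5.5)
\textbf{There is a genuine gap in your final step.} Your tail bound is fine. So is the reduction $P(\text{disagree}\mid\text{data})\le\min(1,\Delta)$ with $\Delta=\max_iX^i-\min_iX^i$; this conditions on the data, whereas the paper conditions on $U$ and uses that the distance $\Delta_U$ from $\overline L_m+U$ to the nearest integer is uniform on $[0,\tfrac12]$. Either route works.

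The problem is your single threshold. It gives $P(\text{disagree})\le Ae^{-E\delta^2/(32m^2)}+2\delta$. With $\delta=c\,m\sqrt{\ln A/E}$ for any fixed $c$, the first term is $A^{1-c^2/32}$, which does not decay in $E$, while the target $17m\sqrt{\ln A/E}$ tends to $0$. With $c=8$ you need $A^{-1}\le m\sqrt{\ln A/E}$, i.e.\ $E\le m^2A^2\ln A$. Your claim that this holds whenever $17m\sqrt{\ln A/E}<1$ is false, since that condition only gives the lower bound $E>289m^2\ln A$. At $E=m^2A^4\ln A$, for example, your bound is $A^{-1}+16A^{-2}>17A^{-2}$.

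Neither repair you suggest proves the lemma as stated. Letting $c$ grow with $E$ gives at best order $m\sqrt{\ln(E/m^2)/E}$ for large $E$; that is the optimum of the one-threshold bound, so the extra logarithm is built into this route and no fixed constant absorbs it. Assuming $E\le\mathrm{poly}(A)$ adds a hypothesis the lemma does not have. The count $17=16+1$ comes only from the step that fails.

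\textbf{The missing idea is to integrate the tail rather than cut it at one level.} Since $P(\Delta>t)\le P\bigl(\max_i|X^i-\overline L_m|>t/2\bigr)\le Ae^{-bt^2}$ with $b=E/(128m^2)$,
\[
\mathbb{E}[\min(1,\Delta)]=\int_0^1P(\Delta>t)\,dt\le t_0+A\int_{t_0}^{\infty}e^{-bt^2}\,dt\le t_0+\frac{Ae^{-bt_0^2}}{2bt_0}.
\]
Choose $t_0=8\sqrt2\,m\sqrt{\ln A/E}$, so that $Ae^{-bt_0^2}=1$. The bound becomes $t_0\bigl(1+\tfrac{1}{2\ln A}\bigr)\le 12\sqrt2\,m\sqrt{\ln A/E}<17\,m\sqrt{\ln A/E}$, using $\ln A\ge 1$. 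This is the paper's computation: it integrates against the density $2$ of $\Delta_U$ and obtains $2\delta_0+\delta_0/\ln A\le 3\sqrt{32}\,m\sqrt{\ln A/E}$, the same constant. Past the crossover point the Gaussian tail contributes only a constant multiple of $t_0$, which is why no $\ln E$ appears.
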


When the players do agree they share the same $\widetilde{m}$, and Problem~C reduces to running the same discretized strategy as before; when they do not, they may follow different grids and we pay for that event in the regret.

\begin{theorem}\label{regret_C}
Let $m \geq 8N/L$, let $A = 4M(2m)^{Md}$, and use the multiplayer subroutine of~\cite{chang2021multiplayer} with $\widehat L$ from~\eqref{discretization_C}. Then
\begin{multline*}
\sup_{\mathcal{F}_{L,N}} R_{T}
    \leq T^{(Md+1)/(Md+2)} \\
    \times \left(9L^{Md/(Md+2)} + 5\left(2m\sqrt{\tfrac{2}{E}\ln{\left(2T^{Md+1}\right)}}\right)^{Md/(Md+2)}\right) \\
    + Em^{Md}+ C\log T\sqrt{T\widetilde{m}^{Md}} + 17\,Tm\sqrt{\ln(A)/E}.
\end{multline*}
In particular, if $E \geq m^{2}T^{2/(Md+2)}\ln A$ the last term is at most $17\,T^{(Md+1)/(Md+2)}$.
\end{theorem}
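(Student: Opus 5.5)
The plan is to split the regret of Theorem~\ref{regret_C} according to three events and bound each piece by an earlier result. Let $G$ be the event that all $M$ players obtain the same $\widehat L$. Let $H$ be the event of Corollary~\ref{cor:Lbounds}, applied to the common value.

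Since per-round regret is at most $1$, we write
\[
R_T \le E m^{Md} + \mathbb{E}\bigl[R^{\mathrm{exploit}}_T \mathbf{1}_{G\cap H}\bigr] + T\,P(G^c) + T\,P(H^c).
\]
The four terms are handled as follows.
\begin{itemize}
\item The first term accounts for the $Em^{Md}$ exploration rounds.
\item The third term is at most $17\,Tm\sqrt{\ln(A)/E}$ by Lemma~\ref{lemma:bad}.
\item The fourth term is at most $1$, taking $\delta = 1/T$ in Corollary~\ref{cor:Lbounds}. It can be absorbed into the stated constants or the $C\log T$ term.
\end{itemize}
The final claim is pure arithmetic. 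If $E \ge m^2T^{2/(Md+2)}\ln A$, then $m\sqrt{\ln A/E}\le T^{-1/(Md+2)}$, so the last term is at most $17\,T^{(Md+1)/(Md+2)}$.

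Before using Corollary~\ref{cor:Lbounds} in Problem~C, I must check that it still holds after dithering. For that, $\widetilde L$ must still upper-bound $\overline L_m$ and be at most $L+1+2m\sqrt{\cdot}$. Flooring $X^i+U$ changes $X^i$ by less than $1$ in either direction. The $+1$ slack in the upper bound of the corollary is meant to absorb this. For the lower side I would accept $\widetilde L \ge \overline L_m - 1$, which is enough to keep $\widetilde L \ge L/8-1$ when $m\ge 8N/L$.

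On $G\cap H$ all players share the same $\widetilde m$ and the same ordering of the $\widetilde m^{Md}$ joint cells. The problem is then a genuine common discrete multiplayer bandit, and I bound the exploitation regret by discretization error plus subroutine regret.
\begin{itemize}
\item \emph{Discretization error.} The best grid point is within $1/\widetilde m$ in $\|\cdot\|_\infty$ of the maximizer, giving at most $TL/\widetilde m$.
\item \emph{Subroutine regret.} The guarantee of \cite{chang2021multiplayer} under unobserved actions and independent rewards gives $C\log T\sqrt{T\widetilde m^{Md}}$.
\end{itemize}
I then substitute $\widetilde m \ge \widetilde L^{2/(Md+2)}T^{1/(Md+2)}$ into $TL/\widetilde m$. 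As in Theorems~\ref{regret_A}–\ref{regret_B}, I use $\widetilde L\ge L/8-1$ and split into cases $L\lessgtr 16$. This bounds $TL/\widetilde m$ by a constant times $T^{(Md+1)/(Md+2)}L^{Md/(Md+2)}$. In the subroutine term I use $\widetilde m \le \widetilde L^{2/(Md+2)}T^{1/(Md+2)}+1$ with the upper bound on $\widetilde L$. Subadditivity of $x\mapsto x^{Md/(Md+2)}$ then splits the contribution into the $9L^{Md/(Md+2)}$ and $5(2m\sqrt{\cdot})^{Md/(Md+2)}$ pieces. The logarithm is rewritten as $\ln(2T^{Md+1})$ using $m\le T$, so $m^{Md}T \le T^{Md+1}$. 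The bookkeeping follows Theorem~\ref{regret_A} verbatim with $E'=E$.

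The main obstacle is the conditioning on $G$. The subroutine's regret guarantee is stated for a fixed instance, but here $\widetilde m$ is random and correlated with the exploration data. I would handle this by observing that exploitation uses only fresh samples, independent of the exploration phase given $\widetilde m$. One then conditions on the common value of $\widetilde m$ and applies the subroutine bound pointwise, and takes expectation over $\widetilde m$ restricted to $G\cap H$. The shared dither $U$ is independent of everything else, so it causes no further issue.
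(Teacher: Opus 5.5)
Your decomposition is the paper's: $Em^{Md}$ for exploration, discretization plus subroutine regret where the players agree, $T\cdot 17m\sqrt{\ln(A)/E}$ for disagreement via Lemma~\ref{lemma:bad}, and at most $1$ for the failure event of Corollary~\ref{cor:Lbounds}. The closing arithmetic is correct. You are also right that dithering leaves only $\widetilde L>\overline L_m-1$. So the inequality $\overline L_m\le\widetilde L_m$ of Corollary~\ref{cor:Lbounds} fails in Problem~C, and the paper does not address this.

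The genuine gap is the case $L<16$ of your bound on $TL/\widetilde m$. You do not spell it out, and it cannot be completed. For $L\le 8$ the inequality $\widetilde L\ge L/8-1$ is vacuous. The only remaining lower bound is $\widetilde L\ge\rho:=m\sqrt{(2/E)\ln(2m^{Md}T)}$, which comes from $\widehat L\ge 0$.

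This is not mere slack. Suppose $\overline L_m<1$ (e.g.\ $L<1$). Then with probability about $1-\overline L_m$, and inside your good event $G\cap H$, every player gets $\widehat L=\lfloor X^i+U\rfloor=0$, so $\widetilde m=\lceil\rho^{2/(Md+2)}T^{1/(Md+2)}\rceil$. Take $m$ fixed and $E=m^2T^{2/(Md+2)}\ln A$, the regime of the theorem's last sentence. Then $\rho\lesssim\sqrt{\ln T}\,T^{-1/(Md+2)}$, and $TL/\widetilde m$ is of order $L\,T^{(Md+1)/(Md+2)+2/(Md+2)^2}$ up to logarithms. That is polynomially larger than every term of the stated bound. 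The paper's proof has the same hole: it asserts that $\widetilde L_m\ge L/8-1$ ``controls the first term''. So you have reproduced its argument faithfully, gap included.

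Closing it requires changing the algorithm. One option is to replace $\widetilde L$ by $\max\{\widetilde L,1\}$ in Problem~C:
\begin{itemize}
\item It is a deterministic function of the common $\widehat L$, so agreement and Lemma~\ref{lemma:bad} are unaffected.
\item The upper bound $L+1+2\rho$ is unchanged.
\item For $L<16$ it gives $L/\widetilde L^{2/(Md+2)}\le L\le 16^{2/(Md+2)}L^{Md/(Md+2)}\le 9L^{Md/(Md+2)}$, which finishes your case split.
\end{itemize}
Alternatively, add $1$ to the padding. This restores $\widetilde L\ge\overline L_m\ge L/8$ at the cost of $L+2$ in the upper bound.

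A smaller point: you need not convert the subroutine term into the $9L^{Md/(Md+2)}$ and $5(\cdot)^{Md/(Md+2)}$ pieces. The statement keeps $C\log T\sqrt{T\widetilde m^{Md}}$ as a separate term, read as an expectation over $\widetilde m$. Done as you describe, the conversion leaves a factor $C\log T$, and $(L+1)$ in place of $L$, which those absolute constants cannot absorb. The discretization term alone supplies the $9L^{Md/(Md+2)}$ piece.
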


Problem~C therefore isolates what each kind of observability buys. Common rewards make $\widehat{L}$ shared automatically and no agreement term is needed; observable actions permit pooling and sharpen the concentration, improving $E$ to $ME$; when neither is available, agreement must be built from concentration together with dithered quantization. Its price is the final term, which is instance-independent and, for the stated exploration budget, of the same order as the leading one, so Problem~C matches Problems~A and~B up to constants once $E$ is large enough.

\section{Experiments}

We simulate a cooperative bandit with $M=2$ players and action dimension $d=1$ each, so $Md = 2$, over $T = 10^5$ rounds and $10$ independent trials for each configuration. A maximizer $a^\star$ is drawn uniformly from $[0,1]^{Md}$ once in each trial and the mean reward is $f(a) = -L\|a-a^\star\|_\infty$, which is $L$-Lipschitz with respect to $\ell_\infty$ and satisfies $f(a^\star) = 0$; rewards are Gaussian with unit variance. We report cumulative pseudo-regret $\sum_{t}(f(a^\star) - f(a_t))$ averaged over trials, with $\pm 1$ standard deviation shading.

Both algorithms discretize the joint space and run UCB on the resulting grid, differing only in how the resolution is set. \emph{Est-$L$} explores a coarse grid of $m$ bins in each coordinate with $E$ uniform samples in each bin, estimates $\widehat{L}_b = m\max_{(b,b')\in\mathcal{N}}|\widehat{\mu}(b)-\widehat{\mu}(b')|$ over neighboring bin pairs $\mathcal{N}$, pads it as in~\eqref{eq:mtilde}, and sets $\widetilde{m}$ from the result; this phase is not aimed at collecting reward, so regret grows roughly linearly while it runs. \emph{No-$L$} skips exploration and takes $\widetilde{m} = \lceil T^{1/(Md+2)}\rceil$, avoiding the up-front cost but risking a resolution mismatched to the smoothness of $f$.

The three information structures are modelled at the level of the feedback reaching each player rather than through the signalling and quantization mechanisms themselves: Problem~A supplies a common reward, Problem~B pools the $M$ independent rewards of a round, the idealized effect of encoding empirical means in actions, and Problem~C uses a single reward stream without pooling. This isolates the effect of feedback quality on the accuracy of $\widetilde{L}$; simulating the signalling and dithering steps directly is left to an extended version.

Fig.~\ref{fig:cum-regret} compares the two rules for a small ($L=1$) and a large ($L=1000$) Lipschitz constant, everything else held fixed. In all three cases the Est-$L$ curves grow linearly during coarse exploration and then bend into a visibly sublinear phase once UCB begins on the refined grid, which is the tradeoff the method makes. When $L$ is small the two rules produce comparable resolutions and finish at similar levels; when $L$ is large, fixing the resolution without reference to $L$ gives a grid too coarse for the variation of $f$, and Est-$L$ overtakes it despite the initial linear segment. The information structure modulates the gain: pooling in Problem~B makes $\widetilde{L}$ more accurate and flattens the later slope relative to Problems~A and~C, while Problem~C, with the weakest feedback, is the most variable.

\begin{figure}[t]
  \centering
  \includegraphics[width=\columnwidth]{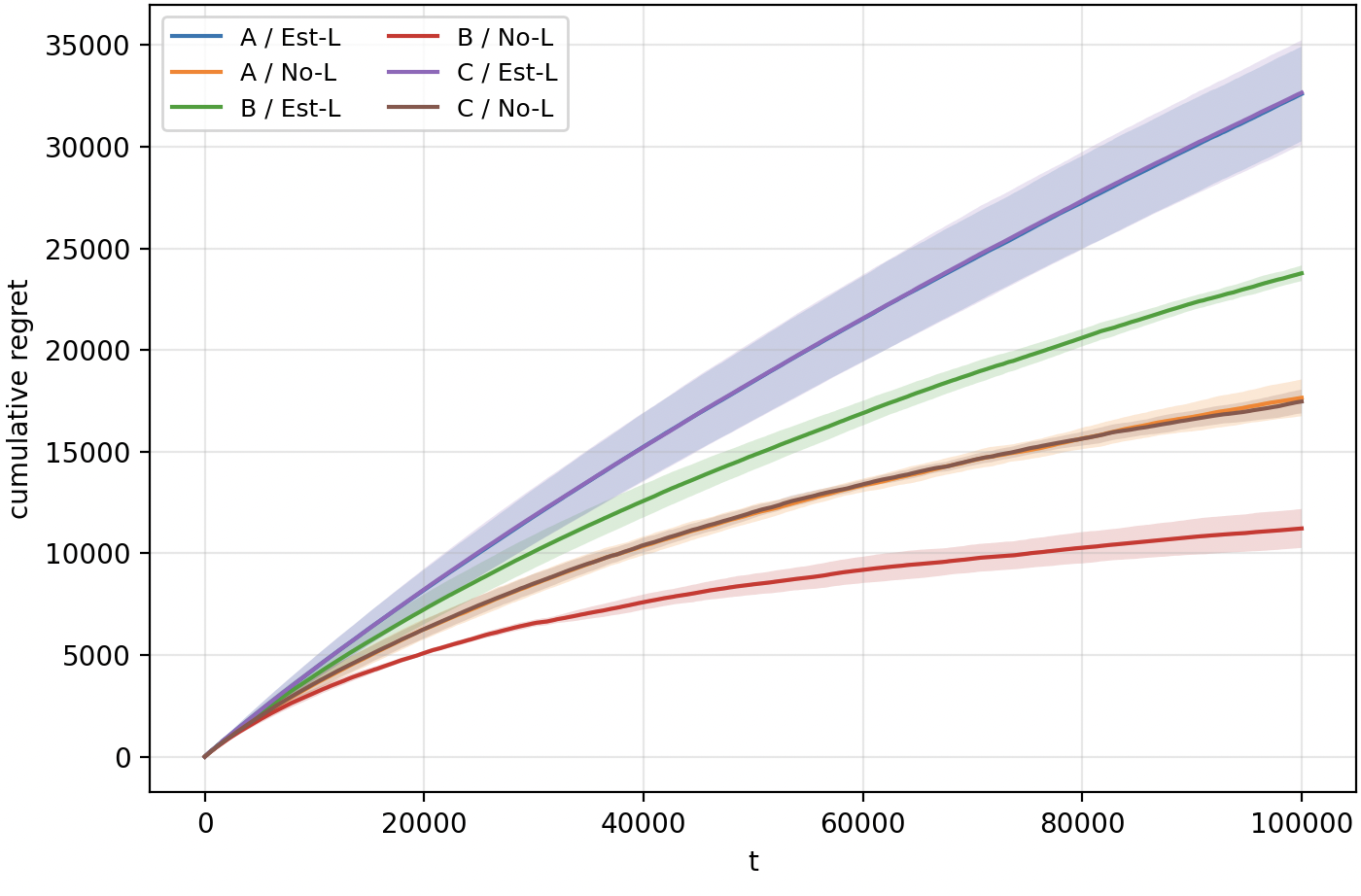}\\[2pt]
  \includegraphics[width=\columnwidth]{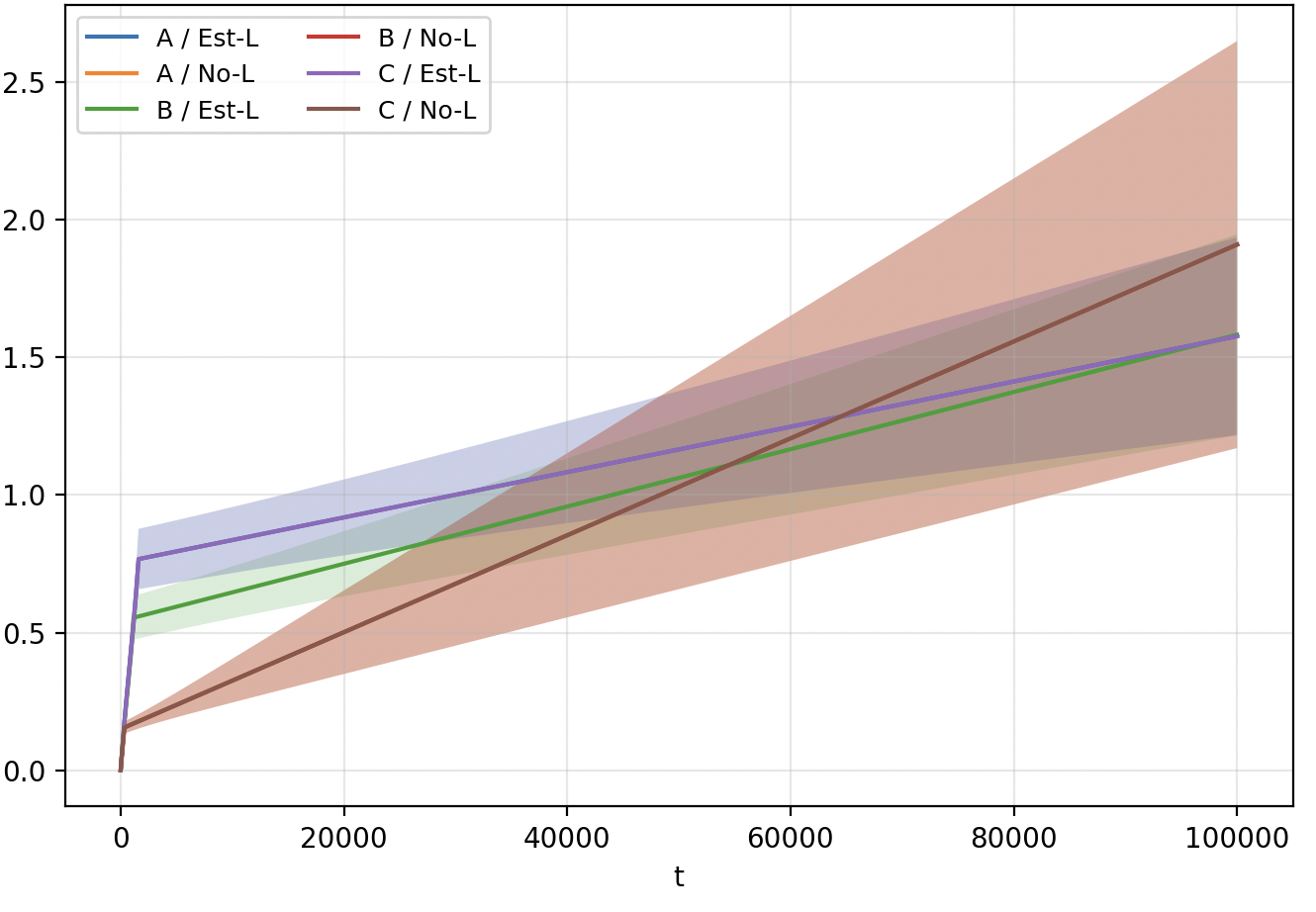}
  \caption{Cumulative regret averaged over trials with $\pm1$ standard deviation shading, for a small Lipschitz constant ($L{=}1$, top) and a large one ($L{=}1000$, bottom). Each panel overlays Problems~A, B, and~C under Est-$L$ and No-$L$ discretization.}
  \label{fig:cum-regret}
\end{figure}

\section{Conclusion}

We extended cooperative multiplayer bandits to Lipschitz action spaces with an unknown Lipschitz constant, where the players' estimates of $L$ must agree for a common discretization to exist. Common rewards and observable actions each deliver that agreement for free, and when neither is present a dithered quantization delivers it at no cost in the leading order of the regret. Natural next steps are adversarial rewards and structural assumptions beyond Lipschitz continuity.

\appendix

\subsection{Proof of Lemma~\ref{lemma:bad}}
\begin{IEEEproof}
Fix $\delta>0$ and let $\overline{f}_m(k)$ denote the mean of $f$ over bin $k$. By Hoeffding's inequality, for any bin $k$,
\begin{equation}\label{eq:hoef}
    P\!\left(|\widehat{\mu}_k - \overline{f}_m(k)| > \tfrac{\delta}{2m}\right) \leq 2\exp\!\left(-\frac{E\delta^2}{32m^2}\right).
\end{equation}
For a neighboring pair $(k,k')$, the triangle inequality and~\eqref{eq:hoef} give
\begin{align*}
    &P\!\left(\left||\widehat{\mu}_k - \widehat{\mu}_{k'}| - |\overline{f}_m(k) - \overline{f}_m(k')|\right| > \tfrac{\delta}{m}\right) \\
    &\quad \leq P\!\left(|\widehat{\mu}_k - \overline{f}_m(k)| > \tfrac{\delta}{2m}\right) + P\!\left(|\widehat{\mu}_{k'} - \overline{f}_m(k')| > \tfrac{\delta}{2m}\right) \\
    &\quad \leq 4\exp\!\left(-\frac{E\delta^2}{32m^2}\right).
\end{align*}
There are at most $m^{Md}2^{Md} = (2m)^{Md}$ pairs $(k,s)$ in the maximum defining $X^i$, so a union bound over them and multiplication by $m$ yield the first claim.

For the second, let $\Delta_U$ be the distance from $\overline{L}_m + U$ to the nearest integer; since $U$ is uniform on $[0,1)$, $\Delta_U$ is uniform on $[0,\tfrac12]$. If $|X^i - \overline{L}_m| < \Delta_U$ for every $i$, then all the $X^i + U$ lie in the same unit interval and every player obtains the same $\widehat{L}$. Writing $a = E/(32m^2)$ and $A = 4M(2m)^{Md}$, a union bound over the $M$ players and the first claim give, conditionally on $U$, a disagreement probability of at most $\min\{1, A e^{-a\Delta_U^2}\}$. Let $\delta_0 = m\sqrt{32\ln(A)/E}$, so that $Ae^{-a\delta_0^2}=1$. Averaging over $U$, whose density is $2$ on $[0,\tfrac12]$,
\begin{align*}
    P(\text{disagreement}) &\leq 2\delta_0 + 2A\!\int_{\delta_0}^{\infty}\! e^{-a\delta^2}\,d\delta \\
    &\leq 2\delta_0 + \frac{A e^{-a\delta_0^2}}{a\delta_0} = 2\delta_0 + \frac{32m^2}{E\delta_0} \\
    &\leq 3\delta_0 \leq 17\,m\sqrt{\ln(A)/E},
\end{align*}
where the second inequality uses $\int_{\delta_0}^{\infty} e^{-a\delta^2}d\delta \leq e^{-a\delta_0^2}/(2a\delta_0)$ and the third uses $\ln A \geq 1$.
\end{IEEEproof}

\subsection{Proof of Theorems~\ref{regret_A}, \ref{regret_B} and~\ref{regret_C}}
\begin{IEEEproof}
We give the argument once, writing $E'$ for the samples per joint bin available to a player, so $E' = ME$ in Problem~B and $E' = E$ otherwise, and $\mathcal{R}(K,T)$ for the regret of the multiplayer subroutine on $K$ joint arms. Exploration costs at most $Em^{Md}$, the discretization bias costs $LT/\widetilde{m}$, and the subroutine contributes $\mathcal{R}(\widetilde{m}^{Md},T)$, so
\begin{equation}\label{eq:skeleton}
\sup_{\mathcal{F}_{L,N}} R_{T} \leq Em^{Md} + \mathbb{E}\!\left[\frac{LT}{\widetilde{m}} + \mathcal{R}(\widetilde{m}^{Md},T)\right] + \Xi,
\end{equation}
where $\Xi = 0$ in Problems~A and~B, since all players hold the same $\widetilde{m}$ by construction, and $\Xi = T\cdot 17m\sqrt{\ln(A)/E}$ in Problem~C by Lemma~\ref{lemma:bad}, bounding the regret on the disagreement event by $T$.

Writing $x = \widetilde{L}_m^{2/(Md+2)}T^{1/(Md+2)}$, \eqref{eq:mtilde} gives $\widetilde{m} \leq x(1 + 1/x)$, and when $x \geq Md$ we may use $(1+1/x)^{Md}\leq e$. Substituting into~\eqref{eq:skeleton},
\begin{align*}
    \sup_{\mathcal{F}_{L,N}} R_{T} \leq\; & Em^{Md} + \Xi \\
    & + \mathbb{E}\bigg[T^{(Md+1)/(Md+2)}\frac{L+1}{\widetilde{L}_m^{2/(Md+2)}} \\
    & \quad + C\log T\sqrt{Te\bigl(T^{1/(Md+2)}\widetilde{L}_m^{2/(Md+2)}\bigr)^{Md}}\bigg].
\end{align*}
By Corollary~\ref{cor:Lbounds}, which applies since $m \geq 8N/L$, with probability at least $1-1/T$,
\begin{align*}
    &C\log T\sqrt{Te\bigl(T^{1/(Md+2)}\widetilde{L}_m^{2/(Md+2)}\bigr)^{Md}} \\
    &\quad \leq C\log T\sqrt{e}\,T^{(Md+1)/(Md+2)} \\
    &\qquad \cdot\!\left((L{+}1)^{\frac{Md}{Md+2}} + \!\left(2m\sqrt{\tfrac{2}{E'}\ln(2m^{Md}T)}\right)^{\!\frac{Md}{Md+2}}\right)\!,
\end{align*}
while $\widetilde{L}_m \geq L/8 - 1$ controls the first term; on the complementary event, of probability below $1/T$, the regret is at most $T$ and contributes at most $1$. Bounding $m$ by $T$ inside the logarithm and choosing $m = \lfloor T^{\alpha}\rfloor$ and $E = m^{2M}\lceil T^{2\gamma(Md+2)/(Md)}\rceil$ for suitable $\alpha,\gamma>0$ gives the stated bounds, with $\mathcal{R}(\widetilde{m}^{Md},T) = 32\sqrt{T\widetilde{m}^{Md}\log T}$ for Problem~A by~\cite{chang2021multiplayer} and for Problem~B by~\cite{chang2023optimal}, and $\mathcal{R}(\widetilde{m}^{Md},T) = C\log T\sqrt{T\widetilde{m}^{Md}}$ for Problem~C by~\cite{chang2021multiplayer}. For the final claim of Theorem~\ref{regret_C}, $E \geq m^2T^{2/(Md+2)}\ln A$ gives $17Tm\sqrt{\ln(A)/E} \leq 17T^{(Md+1)/(Md+2)}$.
\end{IEEEproof}

\bibliographystyle{IEEEtran}
\bibliography{references}

\end{document}